\newtheorem{remark}{Remark}
\newtheorem{prop}{Proposition}
\icmltitlerunning{Submission and Formatting Instructions for ICML 2020}
\begin{document}

\twocolumn[
\icmltitle{Learning Efficient Multi-agent Communication: \\ An Information Bottleneck Approach}



\icmlsetsymbol{equal}{*}

\begin{icmlauthorlist}
\icmlauthor{Rundong Wang}{ntu}
\icmlauthor{Xu He}{ntu}
\icmlauthor{Runsheng Yu}{ntu}
\icmlauthor{Wei qiu}{ntu}
\icmlauthor{Bo An}{ntu}
\icmlauthor{Zinovi Rabinovich}{ntu}
\end{icmlauthorlist}

\icmlaffiliation{ntu}{School of Computer Science and Engineering, Nanyang Technological University, Singapore}

\icmlcorrespondingauthor{Rundong Wang}{rundong001@e.ntu.edu.sg}

\icmlkeywords{Machine Learning, ICML}

\vskip 0.3in
]



\printAffiliationsAndNotice{\icmlEqualContribution} 

\begin{abstract}


We consider the problem of the limited-bandwidth communication for multi-agent reinforcement learning, where agents cooperate with the assistance of a communication protocol and a scheduler. The protocol and scheduler jointly determine \textit{which} agent is communicating \textit{what} message and to \textit{whom}. Under the limited bandwidth constraint, a communication protocol is required to generate informative messages. Meanwhile, an unnecessary communication connection should not be established because it occupies limited resources in vain. In this paper, we develop an {\it Informative Multi-Agent Communication} (IMAC) method to learn efficient communication protocols as well as scheduling. First, from the perspective of communication theory, we prove that the limited bandwidth constraint requires low-entropy messages throughout the transmission. Then inspired by the information bottleneck principle, we learn a valuable and compact communication protocol and a weight-based scheduler. To demonstrate the efficiency of our method, we conduct extensive experiments in various cooperative and competitive multi-agent tasks with different numbers of agents and different bandwidths. We show that IMAC converges faster and leads to efficient communication among agents under the limited bandwidth as compared to many baseline methods.

\end{abstract}


\section{Introduction}

Multi-agent reinforcement learning (MARL) has long been a go-to tool in complex robotic and strategic domains~\cite{robocup,dota2}. In these scenarios,  communicated information enables action and belief correlation that benefits a group's cooperation. Therefore, many recent works in the field of multi-agent communication focus on learning what messages \cite{foerster2016learning,sukhbaatar2016learning,peng2017multiagent} to send and whom to address them \cite{jiang2018learning, kilinc2018multi, das2019tarmac, singh2018learning}.

A key difficulty, faced by a group of learning agents in such domains, is the need to efficiently exploit the available communication resources, such as limited bandwidth. The limited bandwidth exists in two processes of transmission: from agents to the scheduler and from the scheduler to agents as shown in Fig.~\ref{fig:arch}. This problem has recently attracted attention and one strategy has been proposed for limited bandwidth settings: downsizing the communication group via a scheduler \citep{zhang2013coordinating, kim2019learning, mao2019learning}. The scheduler allows a part of agents to communicate so that the bandwidth is not overwhelmed with all agents' messages. 
However, these methods limit the number of agents who can communicate instead of the communication content. Agents may share redundant messages which are unsustainable under bandwidth limitations. For example, a single large message can occupy the whole bandwidth. Also, these methods need specific configurations such as a predefined scale of agents' communication group \citep{zhang2013coordinating,kim2019learning} or a predefined threshold for muting agents \citep{mao2019learning}. Such manual configuration would be of a definite detriment in complex multi-agent domains. 

In this paper, we address the limited bandwidth problem by compressing the communication messages. First, from the perspective of communication theory, we view the messages as random vectors and prove that a limited bandwidth can be translated into a constraint on the communicated message entropy. Thus, agents should generate low-entropy messages to satisfy the limited bandwidth constraint. In more details, derived from source coding theorem \cite{shannon1948mathematical} and Nyquist criterion \cite{freeman2004telecommunication}, we state that in a noiseless channel, when a $K$-ary, bandwidth $B$, quantization interval $\Delta$ communication system transmits $n$ messages of dimension $d$ per second, the entropy of the messages $H(\boldsymbol{m})$ is limited by the bandwidth according to $H(\boldsymbol{m}) \leq \frac{2 B \log_{2} K}{n}+d\log_2{\Delta}$.

Moreover, to allow agents to send and receive low-entropy messages with useful and necessary information, we consider the problem of learning communication protocols and learning scheduling. Inspired by the variational information bottleneck method \citep{tishby2000information, alemi2016deep}, we propose a regularization method for learning informative communication protocols, named {\it Informative Multi-Agent Communication} (IMAC). Specifically, IMAC applies the variational information bottleneck to the communication protocol by viewing the messages as latent variables and approximating its posterior distribution. By regularizing the mutual information between the protocol's inputs (the input features extracted from agents) and the protocol's outputs (the messages), we learn informative communication protocols, which convey low-entropy and useful messages. Also, by viewing the scheduler as a virtual agent, we learn a weight-based scheduler with the same principle which aggregates compact messages by reweighting all agents' messages.

We conduct extensive experiments in different environments: cooperative navigation, predator-prey and StarCraftII. Results show that IMAC can convey low-entropy messages, enable effective communication among agents under the limited bandwidth constraint, and lead to faster convergence as compared with various baselines.

\section{Related Work}

Our work is related to
prior works in multi-agent reinforcement learning with communication, which mainly focus on two basic problems: who/whom and what to communicate. They are also expressed as the problem of learning scheduling and communication protocols.
One line of scheduling methods is to utilize specific networks to learn a weight-based scheduler by reweighting agents' messages, such as  
bi-direction RNNs in BiCNet \cite{peng2017multiagent}, a self-attention layer in TarMAC \cite{das2019tarmac}. 
Another line is to introduce various gating mechanisms to determine the groups of communication agents \cite{jiang2018learning, singh2018learning, kim2019learning, kilinc2018multi, mao2019learning}. 
Communication protocols are often learned in an end-to-end manner with a specific scheduler: from perceptual input (e.g., pixels) to communication symbols (discrete or continuous) to actions (e.g.,
navigating in an environment) \cite{foerster2016learning, kim2019learning}. While some works for learning the communication protocols focus on discrete human-interpretable communication symbols \cite{lazaridou2016multi, mordatch2018emergence},
our method learns a continuous communication protocol in an implicit manner \cite{foerster2016learning, sukhbaatar2016learning, jiang2018learning, singh2018learning}.

Methods for addressing the limited bandwidth problem are explored, such as downsizing the communication group via a scheduler. However, all scheduling methods suffer from content redundancy, which is unsustainable under bandwidth limitations. Even if only a single pair of agents is allowed to communicate, a large message may fail to be conveyed due to the limited bandwidth. In addition, scheduling methods with gating mechanisms are inflexible because they introduce manual configuration, such as the predefined size of a communication group \cite{zhang2013coordinating, kim2019learning}, or a handcrafted threshold for muting agents \cite{jiang2018learning, mao2019learning}. Moreover, most methods for learning communication protocols fail to compress the protocols and extract valuable information for cooperation \cite{jiang2018learning}. In this paper, we study the limited bandwidth in the aspect of communication protocols. Also, our methods can be extended into the scheduling if we utilize a weight-based scheduler.

\begin{figure}[t]
\centering
\includegraphics[width=\linewidth]{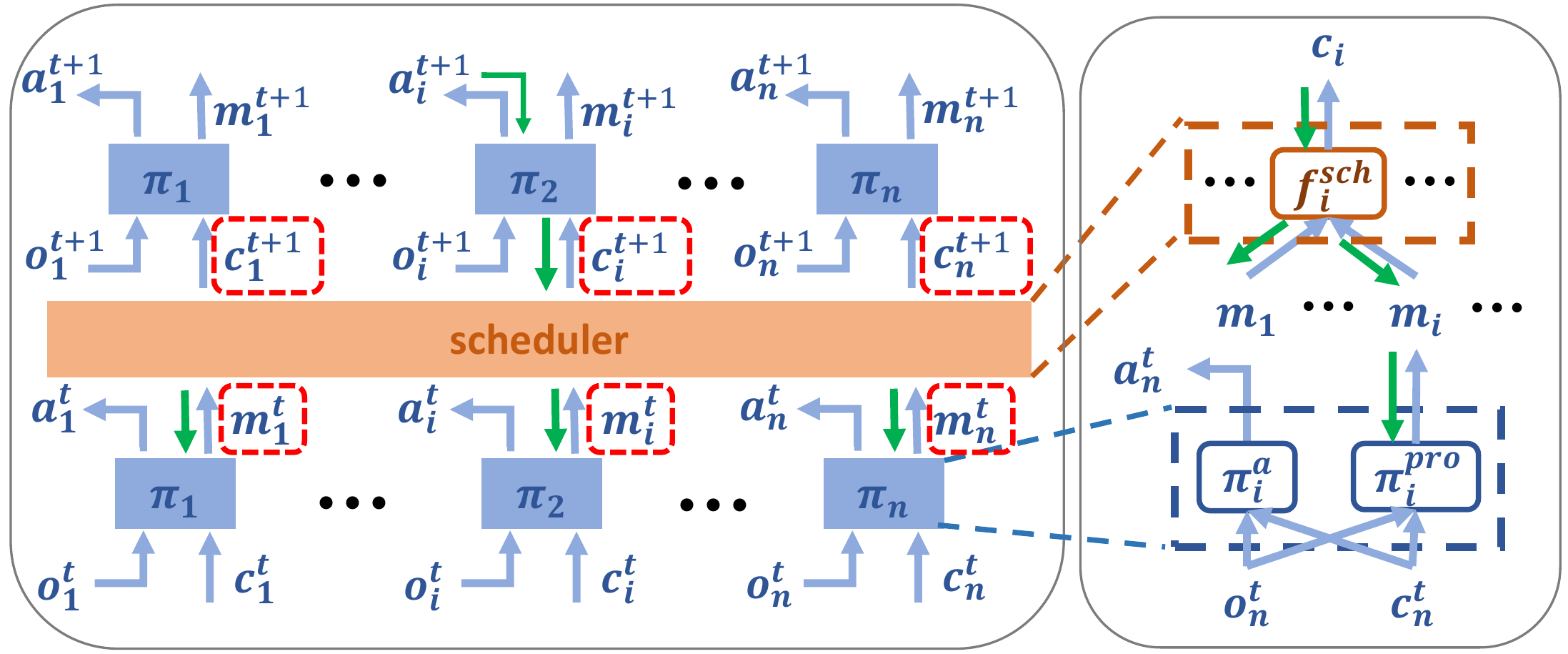}
\vspace{-20pt}
\caption{The Architecture of IMAC. \textbf{Left:} Overview of the communication scheme. The red dashed box means the communication process with a limited bandwidth constraint. The green line means the gradient flows. \textbf{Right:} The upper one is the scheduler for agent $i$. The below one is the policy $\pi^a_i$ and the communication protocol network $\pi^{pro}_i$ for agent $i$}.
\label{fig:arch}
\vspace{-20pt}
\end{figure} 

\section{Problem Setting}

We consider a communicative multi-agent reinforcement learning task, which is extended from Dec-POMDP and described as a tuple $\left \langle n, \boldsymbol{S}, \boldsymbol{A}, r, P, \boldsymbol{O}, \boldsymbol{\Omega}, \boldsymbol{M}, \gamma \right \rangle$, where $n$ represents the number of agents. $\boldsymbol{S}$ represents the space of global states. $\boldsymbol{A} = \{A_i\}_{i=1,\cdots,n}$ denotes the space of actions of all agents. $\boldsymbol{O} = \{O_i\}_{i=1,\cdots,n}$ denotes the space of observations of all agents. $\boldsymbol{M}$ represents the space of messages. $P : \boldsymbol{S} \times \boldsymbol{A} \to \boldsymbol{S}$ denotes the state transition probability function. All agents share the same reward as a function of the states and agents' actions $r : \boldsymbol{S} \times \boldsymbol{A} \to \mathbb{R}$. Each agent $i$ receives a private observation $o_i \in O_i$ according to the observation function $ \boldsymbol{\Omega}(s, i) : \boldsymbol{S} \to O_i$. $ \gamma \in [0,1] $ denotes the discount factor. As shown in Fig. \ref{fig:arch}, each agent receives observation $o_i$ and a scheduling message $c_i$, then outputs an action $a_i$ and a message $m_i$. A scheduler $f^{sch}_i$ is introduced to receive messages $[m_1, \cdots, m_n] \in \boldsymbol{M}$ from all agents and dispatch scheduling messages $c_i = f^{sch}_i(m_1, \cdots, m_n) \in M_i$ for each agent $i$.

We adopt a centralized training and decentralized execution paradigm \cite{lowe2017multi}, and further relax it by allowing agents to communicate. That is, during training, agents are granted access to the states and actions of other agents for the centralized critic, while decentralized execution only requires individual states and scheduled messages from a well-trained scheduler. 

Our end-to-end method is to learn a communication protocol $\pi^{pro}_i(m_i|o_i, c_i)$, an policy $\pi^a_i(a_i|o_i, c_i)$, and a scheduler $f^{sch}_i(c_i|m_1, \cdots, m_n)$, which jointly maximize the expected discounted return for each agent $i$:
\begin{equation}
\begin{small}
\begin{aligned}
J_i &= \mathbb{E}_{\pi^a_i, \pi^{pro}_i, f^{sch}_i}[\Sigma_{t=0}^\infty \gamma^t r_i^t(s,a)] \\& = \mathbb{E}_{\pi^a_i, \pi^{pro}_i, f^{sch}_i}[Q_i(s,a_1, \cdots, a_n)] \\& \approx \mathbb{E}_{\pi^a_i, \pi^{pro}_i, f^{sch}_i}[Q_i(o_1, \cdots, o_n, c_1, \cdots, c_n,a_1, \cdots, a_n)] \\& = \mathbb{E}_{\pi^a_i, \pi^{pro}_i, f^{sch}_i}[Q_i(h_1, \cdots, h_n, a_1, \cdots, a_n)]
\end{aligned}
\end{small}
\end{equation}\normalsize
where $r^t_i$ is the reward received by the $i-$th agent at time $t$, $Q_i$ is the centralized action-value function for each agent $i$, and $\mathbb{E}_{\pi^a_i, \pi^{pro}_i, f^{sch}_i}$ denotes an expectation over the trajectories $\langle s, a_i, m_i, c_i \rangle$ generated by $p^{\pi^a_i}, \pi^a_i(a_i|o_i, c_i), \pi^{pro}_i(m_i|o_i, c_i), f^{sch}_i(c_i|m_1, \cdots, m_n)$. Here we follow the simplification in \cite{lowe2017multi} to replace the global states with joint observations and use an abbreviation $h_i$ to represent the joint value of $[o_i, c_i]$ in the rest of this paper.

The limited bandwidth $B$ is a range of frequencies within a given band. It exists in the two processes of transmission: messages from agents to the scheduler and scheduling messages from the scheduler to agents as shown in Fig.~\ref{fig:arch}. In the next section, we will discuss how the limited bandwidth $B$ affects the communication.


\section{Connection between Limited Bandwidth and Multi-agent Communication}

In this section, from the perspective of communication theory, we show how the limited bandwidth $B$ requires low-entropy messages throughout the transmission. We then discuss how to measure the message's entropy.



\subsection{Communication Process}

We show the communication process (Figure \ref{fig:Communication Scheme}) from agents to the scheduler, which consists of five stages: analog-to-digital, coding, transmission, decoding and digital-to-analog \cite{freeman2004telecommunication}. When agent transmits a continuous message $m_i$  of agent $i$, an analog-to-digital converter (ADC) maps it into a countable set. An ADC can be modeled as two processes: sampling and quantization. Sampling converts a time-varying signal into a sequence of \textit{discrete-time} real-value signal. This operation is corresponding to the discrete timestep in RL. Quantization replaces each real number with an approximation from a finite set of \textit{discrete values}. In the coding phase, the quantized messages $m^{\Delta}_i$ is mapped to a bitstream using source coding methods such as Huffman coding. In the transmission phase, the transmitter modulates the bitstream into wave, and transmit the wave through a channel, then the receiver demodulates the wave into another bitstream due to some distortion in the channel. Then, decoding is the inverse operation of coding, mapping the bitstream to the recovered messages $\hat{m}^{\Delta}_i$. Finally, the scheduler receives a reconstructed analog message from a digital-to-analog converter (DAC). The same process happens when sending the scheduled messages $c_i$ from the scheduler to the agent $i$. The bandwidth actually restricts the transmission phase.
\begin{figure}[!htbp]
\centering
\includegraphics[width=\linewidth]{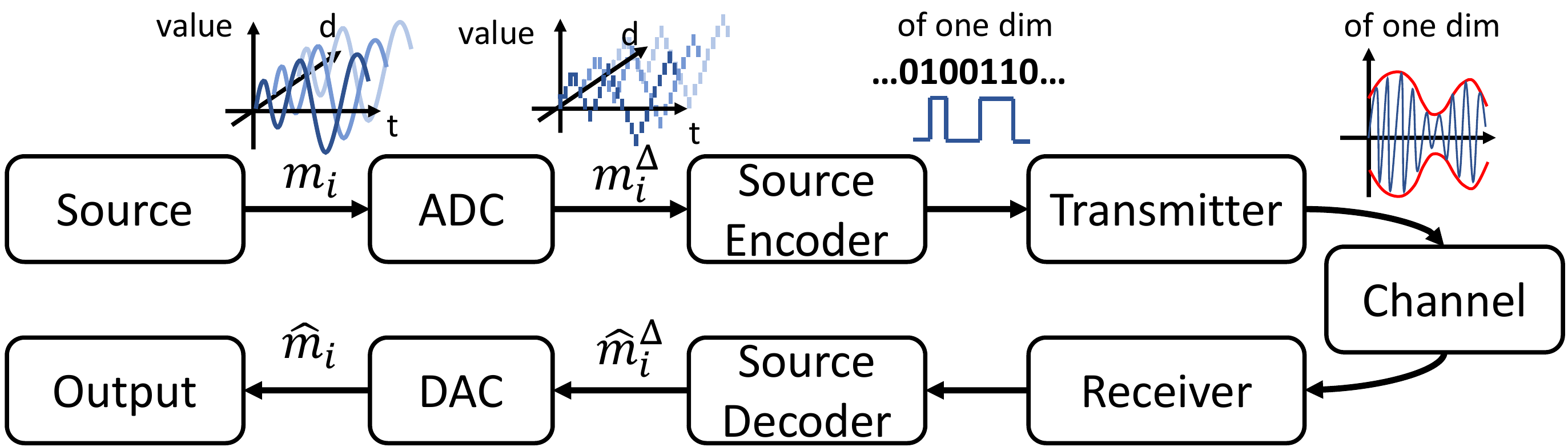}
\vspace{-20pt}
\caption{Overview of the Communication Process. Axes of messages are time, dimension of the message vector, and value of each element in the vector.}
\label{fig:Communication Scheme}
\end{figure}

\vspace{-10pt}

\subsection{Limited Bandwidth Restricts Message's Entropy} \label{link}

We model the messages $m_i$ as continuous random vectors $M_i$, i.e., continuous vectors sampled from a certain distribution. The reason is that a message is sent by one agent in each timestep, while over a long duration, the messages follow some distributions. We abuse $\boldsymbol{m}$ to represent the random vector by omitting the subscript, and explain the subscript in special cases. 


For simplicity, we consider sending an element $X$ of the continuous random vector $\boldsymbol{m}$, which is a continuous random variable, and then extend our conclusion to $\boldsymbol{m}$. First, we quantize the continuous variable into discrete symbols. The quantization brings a gap between the entropy of the discrete variables and differential entropy of the continuous variables.

\begin{remark}[Relationship between entropy and differential entropy]
\label{remark:differential entropy}
Consider a continuous random variable $X$ with a probability density function $f_X(x)$. This
function is then quantized by dividing its range into $K$ levels of interval $\Delta$, where $K= ceil(|X|/\Delta)$, and $|X|$ is max amplitude of variable. The quantized variable is $X^{\Delta}$. Then the difference between differential entropy $H(X)$ and entropy $H\left(X^{\Delta}\right)$ is $H(X)-H\left(X^{\Delta}\right)=\log _{2}(\Delta)$.
\end{remark}

Note that for a fixed small identical interval $\Delta$, there is only a constant difference between differential entropy and entropy. Then we encode these quantized symbols.

\begin{remark}[Source Coding Theorem \cite{shannon1948mathematical}]
\label{remark:source_coding}
In the source coding phase, a set of $n$ quantized symbols is to be encoded into bitstreams. These symbols can be treated as $n$ independent samples of a discrete random variable $X^{\Delta}$ with entropy $H(X^{\Delta})$. Let $L$ be the average number of bits to encode the $n$ symbols. The minimum $L$ satisfies $H(X^{\Delta}) \le L < H(X^{\Delta})+\frac{1}{n}$.
\end{remark}

Remark \ref{remark:source_coding} regularizes the coding phase in the communication process.
Then in the transmission, over a noiseless channel, the maximum bandwidth is defined as the maximum data rate:

\begin{remark}[The Maximum Data Rate \cite{freeman2004telecommunication}]
\label{remark:rate}
The maximum data rate $R_{max}$ (bits per second) over a noiseless channel satisfies:
$R_{max} = 2 B \log_{2} K$, where $B$ is the bandwidth (Hz) and $K$ is the number of signal levels.
\end{remark}

Remark \ref{remark:rate} is derived from the Nyquist criterion \cite{freeman2004telecommunication} and specifies how the bandwidth of a communication system affects the transmission data rate for reliable transmission in the noiseless condition. Based on these three remarks, we show how the limited bandwidth constraint affects the multi-agent communication.

\begin{prop}
\label{prop:entropy1}
In a noiseless channel, the bandwidth of channel $B$ limits the entropy of the messages' elements.
\end{prop}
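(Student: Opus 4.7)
The plan is to chain the three remarks together. Roughly speaking, Remark~\ref{remark:rate} caps the number of bits per second that the channel can carry, Remark~\ref{remark:source_coding} translates bits per second into an upper bound on the entropy of the quantized signal, and Remark~\ref{remark:differential entropy} lifts that bound from the quantized entropy $H(X^\Delta)$ to the differential entropy $H(X)$ of the continuous element.

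First, I would fix notation: suppose the system transmits $n$ quantized samples of the scalar element $X$ per second, over a $K$-ary channel of bandwidth $B$, with quantization interval $\Delta$. Let $L$ be the average code length (in bits) per symbol produced by an optimal source code. Since $n$ symbols are emitted per second, the required data rate is $nL$, and reliable noiseless transmission demands $nL \le R_{\max} = 2B\log_2 K$ by Remark~\ref{remark:rate}. Hence
\begin{equation*}
L \;\le\; \frac{2B\log_2 K}{n}.
\end{equation*}

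Second, I would invoke Remark~\ref{remark:source_coding}, which gives $H(X^\Delta) \le L$. Combining with the previous display yields $H(X^\Delta) \le \frac{2B\log_2 K}{n}$. Finally, Remark~\ref{remark:differential entropy} says $H(X) = H(X^\Delta) + \log_2 \Delta$, so substituting gives the desired elementwise bound
\begin{equation*}
H(X) \;\le\; \frac{2B\log_2 K}{n} + \log_2 \Delta,
\end{equation*}
which shows that $B$ directly caps the differential entropy of each scalar element of the message.

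I do not expect any step to be a genuine obstacle, since each inequality is a direct application of a cited remark; the only mild subtlety is aligning the units (bits per symbol versus bits per second) so that $nL$ is correctly identified with the data rate, and being explicit that Remark~\ref{remark:source_coding} is being applied on a per-second basis with $n$ samples. The extension from a single element $X$ to the full vector $\boldsymbol{m}$ of dimension $d$ (giving the $d\log_2 \Delta$ term advertised in the introduction) is left for the subsequent discussion and is not part of the current statement, which concerns only the entropy of an individual element.
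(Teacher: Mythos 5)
Your proposal is correct and follows essentially the same route as the paper's own proof: it chains Remark~\ref{remark:rate} (rate cap $nL \le 2B\log_2 K$), Remark~\ref{remark:source_coding} ($H(X^\Delta)\le L$, which the paper writes as $R_{code}\ge H(X^\Delta)$), and Remark~\ref{remark:differential entropy} ($H(X)=H(X^\Delta)+\log_2\Delta$) to obtain $H(X)\le \frac{2B\log_2 K}{n}+\log_2\Delta$. Your version is if anything slightly cleaner about the units and about where each remark enters, but there is no substantive difference in the argument.
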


\vspace{-10pt}

\begin{proof}[Proof]
Given a message's element $X$ as an i.i.d continuous random variable with differential entropy $H(X)$, its quantized time series $X^{\Delta}_1, \cdots, X^{\Delta}_t, \cdots$ (here the subscript means different times) with entropy $H(X^{\Delta})=H(X)-\log_2{\Delta}$, the communication system's bandwidth $B$, as well as the signal levels $K$, the communication system transmits $n$ symbols per second. So the transmission rate $R_{trans} (\frac{\text{bit}}{\text{second}})=  n \cdot R_{code} (\frac{\text{bit}}{\text{symbol}}) \geq n \cdot H(X^{\Delta}) \geq n \cdot (H(X)-\log{\Delta})$.\footnote{$\frac{\text{bit}}{\text{second}}$ and $\frac{\text{bit}}{\text{symbol}}$ are units of measure.} According to Remark \ref{remark:rate}, $R_{trans} \leq R_{max} = 2 B \log_{2} K$. Consequently, we have $H(X) \leq \frac{2 B \log_{2} K}{n}+\log_2{\Delta}$. 
\end{proof}

\begin{prop}
\label{prop:entropy2}
In a noiseless channel, the bandwidth of channel $B$ limits the entropy of the messages.
\end{prop}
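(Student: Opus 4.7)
The plan is to lift the single-coordinate argument from Proposition~\ref{prop:entropy1} to the $d$-dimensional message vector $\boldsymbol{m}$ by treating the whole vector as one quantization-and-coding unit, rather than summing per-element bounds (which would give only a loose factor-of-$d$ inflation on the leading term).

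First, I would promote Remark~\ref{remark:differential entropy} to the vector setting. If every coordinate of $\boldsymbol{m}$ is quantized with the same interval $\Delta$, the joint quantizer tiles $\mathbb{R}^{d}$ with cubes of volume $\Delta^{d}$, and the usual Riemann-sum argument on the joint density $f_{\boldsymbol{m}}(\boldsymbol{x})$ gives
\[
H(\boldsymbol{m}) - H(\boldsymbol{m}^{\Delta}) \;=\; \log_{2}(\Delta^{d}) \;=\; d\log_{2}\Delta,
\]
which is the natural $d$-dimensional analog of the scalar relation used in the previous proposition.

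Next I would apply Remark~\ref{remark:source_coding} to $\boldsymbol{m}^{\Delta}$ viewed as a single discrete random variable that takes values in the (countable) set of cube labels. Writing $L$ for the average number of bits used per vector-valued symbol, the source coding theorem gives $L \geq H(\boldsymbol{m}^{\Delta})$. If the system transmits $n$ vectors per second, the transmission rate is $R_{\text{trans}} = nL \geq n\,H(\boldsymbol{m}^{\Delta})$. Combining this with the Nyquist bound from Remark~\ref{remark:rate}, $R_{\text{trans}} \leq 2B\log_{2}K$, and then substituting the entropy gap from the first step, I obtain
\[
H(\boldsymbol{m}) \;\leq\; \frac{2B\log_{2}K}{n} + d\log_{2}\Delta,
\]
which is exactly the bound advertised in the introduction.

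The only delicate point---and the step I would flag as the main obstacle---is recognizing that one must code the quantized \emph{vector} jointly rather than coordinate-wise. A coordinate-wise application of Proposition~\ref{prop:entropy1} together with subadditivity $H(\boldsymbol{m})\leq\sum_{j}H(X_{j})$ would yield the much weaker $d\cdot\frac{2B\log_{2}K}{n}$ on the channel-capacity term, because it implicitly assumes one scalar symbol per channel use per coordinate. Treating $\boldsymbol{m}^{\Delta}$ as a single alphabet symbol respects the operational meaning of ``one message per channel use'' and is what makes the tighter bound fall out; beyond this reinterpretation, the remainder of the proof is a direct transcription of the scalar argument.
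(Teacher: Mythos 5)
Your proof is correct and reaches the stated bound, but it takes a genuinely different route from the paper. The paper's own proof stays coordinate-wise: it posits that each entry $X_j$ of the message vector occupies a sub-band $B_j$ with $\sum_{j=1}^{d} B_j = B$, applies Proposition~\ref{prop:entropy1} to each entry with its own bandwidth $B_j$, and then uses subadditivity $H(M_i) \leq \sum_j H(X_j)$ so that the capacity terms sum to $\frac{2B\log_2 K}{n}$ and the quantization terms sum to $d\log_2\Delta$. Your proposal instead quantizes and codes the vector \emph{jointly} as a single alphabet symbol, using the $d$-dimensional entropy gap $H(\boldsymbol{m})-H(\boldsymbol{m}^{\Delta})=d\log_2\Delta$ and one application of the source coding theorem to $\boldsymbol{m}^{\Delta}$. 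Note that your stated motivation is slightly off: the naive coordinate-wise argument only inflates the capacity term by a factor of $d$ if each coordinate is granted the \emph{full} bandwidth $B$, and the paper's bandwidth-splitting device is precisely what avoids that, so both routes land on the same constant. What your route buys is that it dispenses with the somewhat artificial physical assumption that the channel partitions into per-coordinate sub-bands, and it is tighter in principle since joint coding needs only $H(X_1,\ldots,X_d)$ bits rather than $\sum_j H(X_j)$; what the paper's route buys is that it reuses Proposition~\ref{prop:entropy1} verbatim as a black box. Either argument is acceptable.
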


\vspace{-10pt}

\begin{proof}[Proof]
When sending the random vector, i.e., the message $M_i=[X_1, X_2, \cdots, X_d]$, where the subscript means different entries of the vector and $d$ is the dimension, each variable $X_i$ occupies a bandwidth $B_i$, which satisfies $\sum_{i=1}^{d}{B_i}=B$. Assume all entries are quantilized with the same interval, according to \cite{cover2012elements}, the upper bound of the messages $H(M_i) = H(X_1, \cdots, X_d) \leq \sum_{i=1}^{d}{H(X_i)} \leq \frac{2 B \log_{2} K}{n}+d\log_2{\Delta}$. 
\end{proof}

Eventually, a limited bandwidth $B$ enforces an upper bound $H_c$ to the message's entropy $H(M_i) \leq H_c \propto B$. 

\subsection{Measurement of a Message's Entropy} \label{measure}

The messages $M_i$ as an i.i.d random vector can follow any distribution, so it is hard to determine the message's entropy. So, we keep a historical record of the message and find a quantity to measure the message's entropy.

\begin{prop}
\label{prop:measure}
When we have a historical record of the messages to estimate the messages' mean $\boldsymbol{\mu}$ and co-variance $\boldsymbol{\boldsymbol{\Sigma}}$, the entropy of the messages is upper bounded by $\frac{1}{2}\log((2\pi e)^d |\boldsymbol{\Sigma}|)$, where $d$ is the dimension of $M_i$.
\end{prop}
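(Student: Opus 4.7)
The plan is to prove this by invoking the well-known maximum-entropy property of the multivariate Gaussian: among all continuous distributions on $\mathbb{R}^d$ with a fixed mean $\boldsymbol{\mu}$ and covariance $\boldsymbol{\Sigma}$, the Gaussian $\mathcal{N}(\boldsymbol{\mu},\boldsymbol{\Sigma})$ achieves the largest differential entropy, and that value is exactly $\tfrac{1}{2}\log((2\pi e)^d |\boldsymbol{\Sigma}|)$. So the proof reduces to (i) computing the entropy of a Gaussian with covariance $\boldsymbol{\Sigma}$, and (ii) showing no other distribution with the same second moments can exceed it.

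First I would let $p$ denote the (unknown) density of the message vector $M_i$, with mean $\boldsymbol{\mu}$ and covariance $\boldsymbol{\Sigma}$, and introduce the auxiliary Gaussian density $q = \mathcal{N}(\boldsymbol{\mu},\boldsymbol{\Sigma})$. A direct computation gives
\begin{equation*}
-\int q(x)\log q(x)\,dx \;=\; \tfrac{1}{2}\log\bigl((2\pi e)^d |\boldsymbol{\Sigma}|\bigr),
\end{equation*}
by plugging in the Gaussian density, expanding the quadratic form $(x-\boldsymbol{\mu})^\top \boldsymbol{\Sigma}^{-1}(x-\boldsymbol{\mu})$ inside the logarithm, and using $\mathbb{E}_q[(x-\boldsymbol{\mu})^\top \boldsymbol{\Sigma}^{-1}(x-\boldsymbol{\mu})] = \operatorname{tr}(\boldsymbol{\Sigma}^{-1}\boldsymbol{\Sigma}) = d$.

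Next I would use the non-negativity of the Kullback--Leibler divergence: $D_{\mathrm{KL}}(p\,\|\,q) \geq 0$. Expanding,
\begin{equation*}
D_{\mathrm{KL}}(p\,\|\,q) \;=\; -H(p) \;-\; \int p(x)\log q(x)\,dx \;\geq\; 0.
\end{equation*}
The key observation is that $\log q(x)$ is a quadratic polynomial in $x$ whose expectation depends only on the first and second moments of the integrating measure. Since $p$ and $q$ share the same mean $\boldsymbol{\mu}$ and covariance $\boldsymbol{\Sigma}$, we get $\int p(x)\log q(x)\,dx = \int q(x)\log q(x)\,dx = -\tfrac{1}{2}\log((2\pi e)^d|\boldsymbol{\Sigma}|)$. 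Combining with the inequality above yields $H(p) \leq \tfrac{1}{2}\log((2\pi e)^d|\boldsymbol{\Sigma}|)$, which is exactly the claim.

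The main obstacle is not analytical — the steps are standard — but conceptual: one must verify that the cross-entropy $-\int p\log q$ really equals the Gaussian entropy, which relies on the moment-matching between $p$ and $q$. I would make this explicit by writing $\log q(x)$ out as $-\tfrac{1}{2}\log((2\pi)^d|\boldsymbol{\Sigma}|) - \tfrac{1}{2}(x-\boldsymbol{\mu})^\top\boldsymbol{\Sigma}^{-1}(x-\boldsymbol{\mu})$ and noting that taking expectation under either $p$ or $q$ yields the same value because both terms only involve moments up to order two. A minor caveat worth mentioning is that the bound presupposes $\boldsymbol{\Sigma}$ is positive definite (so that $q$ is well-defined); the degenerate case could be handled by a standard limiting argument, but for the purposes of the paper the non-degenerate case suffices.
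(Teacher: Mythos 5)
Your proposal is correct and rests on the same key fact as the paper's proof: the maximum-entropy property of the Gaussian among all distributions with fixed mean and covariance. The paper simply cites this property (deferring the proof to Cover \& Thomas), whereas you supply the standard derivation via non-negativity of the KL divergence and moment matching, so yours is just a more self-contained version of the same argument.
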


\begin{proof}[Proof]

The message $M_i$ follows a certain distribution, and we are only certain about its mean and variance. According to the principle of maximum entropy \cite{jaynes1957information}, the Gaussian distribution has maximum entropy relative to all probability distributions covering the entire real line $(-\infty,\infty)$ but having a finite mean and finite variance (see the proof in \cite{cover2012elements}). So $H(M_i) \leq \frac{1}{2}\log((2\pi e)^d \boldsymbol{\Sigma})$, where the right term is the entropy of multivariate Gaussian $N(\boldsymbol{\mu},\boldsymbol{\Sigma})$.
\end{proof}

We conclude that $\frac{1}{2}\log((2\pi e)^d |\boldsymbol{\Sigma}|)$ offers an upper bound to approximate $H(M_i)$, and this upper bound should be less than or equal to the limited bandwidth constraint to ensure that the message with any possible distribution satisfies the limited bandwidth constraint.


\section{Informative Multi-agent Communication}

As shown in the previous section, the limited bandwidth requires agents to send low-entropy messages. In this section, we first introduce our method for learning a valuable and low-entropy communication protocol via the information bottleneck principle. Then, we discuss how to use the same principle in scheduling.

\subsection{Variational Information Bottleneck for Learning Protocols}
\label{vib}
We propose an informative multi-agent communication via information bottleneck principle to learn protocols. Concretely, we propose an information-theoretic regularization on the mutual information $I(H_i; M_i)$ between the messages and the input features
\begin{small}
\begin{align}
I(H_i; M_i) \leq I_c
\end{align}
\end{small}\normalsize
where $M_i$ is a random vector with a probability density function $p_{M_i}(m_i)$, which represents the possible assignments of the messages $m_i$, and $H_i$ is a random vector with a probability density function $p_{H_i}(h_i)$ which the possible values of $[o_i, c_i]$. We omit the subscripts in the density functions in the rest of the paper. Eventually, with the help of variational information bottleneck \cite{alemi2016deep}, this regularization enforces agents to send low-entropy messages.


Consider a scenario with $n$ agents' policies $\{\pi^a_i\}_{i=1,\cdots,n}$ and protocols $\{\pi^{pro}_i\}_{i=1,\cdots,n}$ which are parameterized by $\{\theta_i\}_{i=1,\cdots,n} = \{\theta^a_i, \theta^{pro}_i\}_{i=1,\cdots,n}$, and with schedulers $\{f^{sch}_i\}_{i=1,\cdots,n}$ which is parameterized by $\{\phi_i\}_{i=1,\cdots,n}$. 
Consequently, for learning the communication protocol with fixed schedulers, the agent $i$ is supposed to maximize:
\begin{small}
\begin{align*}
J ( \theta_i ) &= \mathbb{E}_{\pi^a_i, \pi^{pro}_i, f^{sch}_i}[Q_i(h_1, \cdots, h_n,a_1,\cdots,a_n)] \\& \qquad \text{s.t.} \ \ I(H_i; M_i) \leq I_c
\end{align*}
\end{small}\normalsize
Practically, we propose to maximize the following objective using the information bottleneck Lagrangian:
\begin{small}
\begin{multline}
J' ( \theta_i ) = \mathbb{E}_{\pi^a_i, \pi^{pro}_i, f^{sch}_i}[Q_i(h_1, \cdots, h_n,a_1, \cdots, a_n)] \\ - \beta I(H_i; M_i)
\end{multline}
\end{small}\normalsize
where the $\beta$ is the Lagrange multiplier. The mutual information is defined according to:
\begin{small}
\begin{align*}
\begin{aligned}
I(H_i; M_i) &= \iint p ( {h_i} , {m_i} ) \log \frac { p ( {h_i} , {m_i} ) } { p ( {h_i}) p ( {m_i} ) } \text{d}h_i \text{d}m_i \\&= \iint p ( {h_i} ) \pi^{pro} ( {m_i} | {h_i}) \log \frac { \pi^{pro} ( {m_i} | {h_i} ) } { p ( {m_i} ) } \text{d}h_i \text{d}m_i
\end{aligned}
\end{align*}
\end{small}\normalsize
where $p({h_i}, {m_i})$ is the joint probability of ${h_i}$ and ${m_i}$. 

However, computing the marginal distribution $p({m_i})=\int \pi^{pro}({m_i}|{h_i})p({h_i}) d{h_i}$ can be challenging since we do not know the prior distribution of $p({h_i})$. 
With the help of variational information bottleneck \cite{alemi2016deep}, we use a Gaussian approximation $z({m_i})$ of the marginal distribution $p({m_i})$ and view $\pi^{pro}$ as multivariate variational encoders. Since $D_{KL}[p({m_i})||z({m_i})] \geq 0$, where the $D_{KL}$ is the Kullback-Leibler divergence, we expand the KL term and get $\int p({m_i}) \log p({m_i}) d{m_i} \geq  \int p({m_i}) \log z({m_i}) d{m_i}$, an upper bound on the mutual information $I(H_i; M_i)$ can be obtained via the KL divergence:
\begin{small}
\begin{equation}
\begin{aligned}
I(H_i; M_i) &\leq \int p({h_i}) \pi^{pro}_i({m_i}|{h_i}) \log \frac{\pi^{pro}_i({m_i} | {h_i})} {z({m_i})} d{h_i} d{m_i} \\&= \mathbb{E}_{{h_i}\sim p({h_i})}[D_{KL}[\pi^{pro}({m_i}|{h_i})\|z({m_i})]]
\end{aligned}
\end{equation}
\end{small}\normalsize
This provides a lower bound $\tilde{J}(\theta)$ on the regularized objective that we maximize:
\begin{small}
\begin{align*}
\begin{aligned}
J'(\theta_i) \geq \tilde{J}(\theta_i)
= \mathbb{E}_{\pi^a_i, \pi^{pro}_i, f^{sch}_i}[Q_i(h_1, \cdots, h_n ,a_1, \cdots, a_n)] \\- \beta  \mathbb{E}_{{h_i} \sim p({h_i})}[D_{KL}[\pi^{pro}_i({m_i}|{h_i})\|z({m_i})]]
\end{aligned}
\end{align*}
\end{small}\normalsize
Consequently, the objective's derivative is:
\begin{small}
\begin{multline}
\nabla_{\theta_i} \tilde{J}\left(\pi_i\right) = \mathbb{E}_{\pi^a_i, \pi^{pro}_i, f^{sch}_i} \Big[\nabla_{\theta_i} \log \left(\pi_i\left(a_{t}|s_{t}\right)\right) \\ Q_i(h_1, \cdots, h_n, a_1, \cdots, a_n) - \beta \nabla_{\theta_i}D_{KL}[\pi^{pro}({m_i}|{h_i})\|z({m_i})] \Big]
\end{multline}
\end{small}\normalsize
With the variational information bottleneck, we can control the messages' distribution and thus control their entropy with different prior $z({m_i})$ to satisfy different limited bandwidths in the training stage. That is, with the regulation of $D_{KL}[p({m_i}|{h_i})\|z({m_i})]$, the messages' probability density function $p({m_i}) =  \sum_{{h_i}} p({m_i} | {h_i}) p({h_i}) \approx \sum_{{h_i}} z({m_i}) p({h_i}) = z({m_i}) \sum_{{h_i}} p({h_i})=z({m_i})$. Thus $H(M_i) = - \int p \log{p} \text{d}m_i \approx - \int z \log{z} \text{d}m_i$.

\subsection{Unification of Learning Protocols and Scheduling}

The scheduler for agent $i$ is $f^{sch}_i(c_i|m_1, \cdots, m_n)$. Recall the communication protocols for agent $i$: $\pi^{pro}_i(m_i|h_i)$. Due to the same form of the protocol and the scheduling, the scheduler is supposed to follow the same principle for learning a weight-based mechanism. Variational information bottleneck can be applied in scheduling for agent $i$ with regularization on the mutual information between the scheduling messages ${c_i}$ and all agents' messages $I({C_i}; {M_1}, \cdots, {M_n})$, where $C_i$ is a random vector with a probability density function $p_{C_i}(c_i)$, which represent different values of $c_i$. We follow the joint training scheme for training the communication protocol and scheduling \cite{foerster2016learning}, which allows the gradients flow across agents from the recipient agent to the scheduler to the sender agent.

\subsection{Implementation of the limited bandwidth Constraint} 

During the execution stage, the messages may still obey the low-entropy requirement. We implement the limited bandwidth during the execution according to the low-entropy principle. Concretely, we use a batch-normalization-like layer which records the messages' mean and variance during training as Prop. \ref{prop:entropy2} requires. And it clips the messages by reducing their variance during execution. The purpose of our normalization layer is to measure the messages' mean and variance, as well as to simulate the external limited bandwidth constraint during execution. It is customized and different from standard batch normalization \cite{ioffe2015batch}.  

\begin{figure*}[t]
 
\centering
\includegraphics[width=\linewidth]{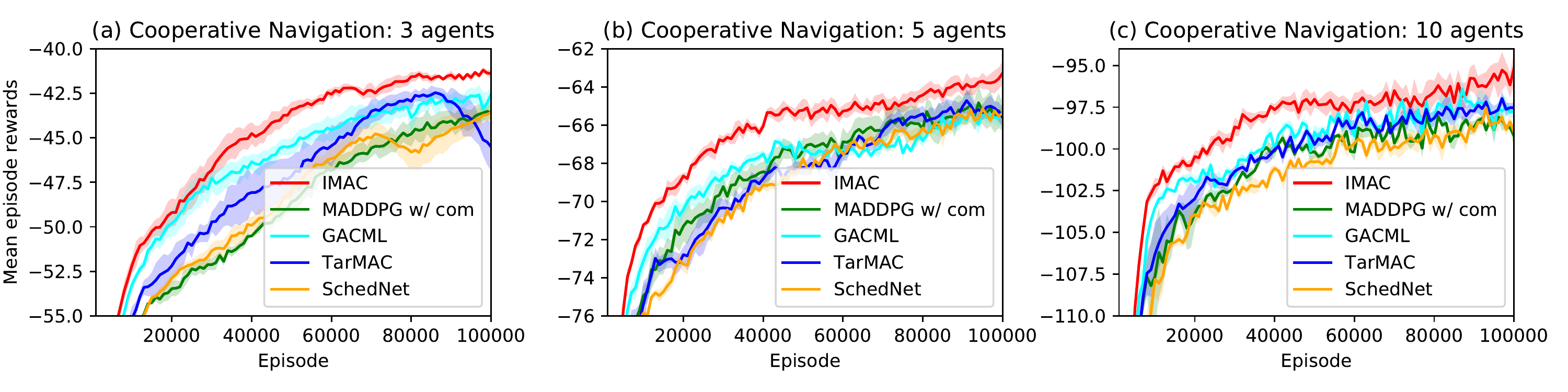}
\vspace{-25pt}
\caption{Learning curves comparing IMAC to other methods for cooperative navigation. As the number of agents increases (from left to right), IMAC improves agents’ performance and converge faster.}
\label{fig:co_na}
\end{figure*} 

\begin{figure*}[t]
\centering
\includegraphics[width=\linewidth]{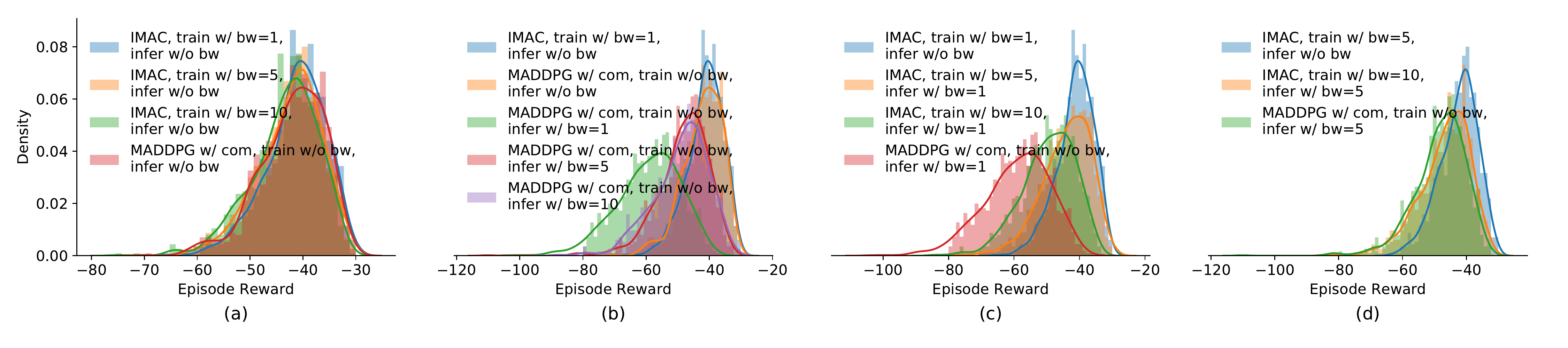}
\vspace{-25pt}
\caption{Density plot of episode reward per agent during the execution stage. (a) Reward distribution of IMAC trained with different prior distributions against MADDPG with communication. (b) Reward distribution of MADDPG with communication under different limited bandwidth environment. (c), (d) Reward distribution of IMAC trained with different prior distributions against MADDPG with communication under the same bandwidth constraint. ``bw=$\delta$" means in the implementation of the limited bandwidth constraint, the variance $\boldsymbol{\Sigma}$ of Gaussian distribution is $\delta$.}
\label{fig:limited_bandwidth}
\end{figure*}

\section{Experiment}

\textbf{Environment Setup.} We evaluate IMAC on a variety of tasks and environments: cooperative navigation and predator prey in Multi Particle Environment \cite{lowe2017multi}, as well as 3m and 8m in StarCraftII \cite{samvelyan19smac}. The detailed experimental environments are elaborated in the following subsections as well as in supplementary material.

\textbf{Baselines.} We choose the following methods as baselines: (1) TarMAC \cite{das2019tarmac}: A state-of-the-art multi-agent communication method for limited bandwidth, which uses a self-attention weight-based scheduling mechanism for scheduling and learns the communication protocol in an end-to-end manner. (2) GACML \cite{mao2019learning}: A multi-agent communication method for limited bandwidth, which uses a gating mechanism for downsizing communication agents and learns the communication protocol in an end-to-end manner. (3) SchedNet \cite{kim2019learning}: A multi-agent communication method for limited bandwidth, which uses a selecting mechanism for downsizing communication agents and learns the communication protocol where the message is one real value (float16 type). Also, we modify MADDPG \cite{lowe2017multi} and QMIX \cite{rashid2018QMIX} with communication as baselines to show that IMAC can facilitate different multi-agent methods and work well with limited bandwidth constraints.

\subsection{Cooperative Navigation}

In this scenario, $n$ agents cooperatively reach $k$ landmarks while avoiding collisions. Agents observe the relative positions of other agents and landmarks and are rewarded with a shared credit based on the sum of distances between agents to their nearest landmark, while they are penalized when colliding with other agents. Agents learn to infer and occupy the landmarks without colliding with other agents based on their own observation and received information from other agents.

\textbf{Comparison with baselines.} We compare IMAC with TarMAC, GACML, and SchedNet because they represent the method of learning the communication protocols via end-to-end training with the specific scheduler and clipping the messages respectively. Also due to different bandwidth definitions, we also compare with the modified MADDPG with communication, which is trained without the limited bandwidth constraint, because it offers the baseline performance of the centralized training and decentralized execution.

Figure \ref{fig:co_na} (a) shows the learning curve of 100,000 episodes in terms of the mean episode reward over a sliding window of 1000 episodes. We can see that at the end of the training, agents trained with communication have higher mean episode reward. According to \cite{lowe2019pitfalls}, ``increase in reward when adding a communication channel" is sufficient to effective communication. Additionally, IMAC outperforms other baselines along the process of training, i.e., IMAC can reach upper bound of performance early. By using the information bottleneck method, messages are less redundant, thus agents converge fast (More analysis can be seen in the supplementary materials). 

\begin{table*}[t]
\centering
\begin{tabular}{|c|c|c|c|c|c|}
\hline
\diaghead(-4,1){aaaaaaaaaaaaaaa}%
{Predator}{Prey} & IMAC                                          & TarMAC                                 & GACML                                   & SchedNet                                & MADDPG w/ com                            \\ 
\hline
IMAC                              & \textbf{32.32}\textbackslash{}\textbf{-4.26}  & \textbf{28.91}\textbackslash{} -22.27  & \textbf{28.25} \textbackslash{} -26.11  & 22.67 \textbackslash{} -36.53           & \textbf{34.33} \textbackslash{} -22.62   \\ 
\hline
TarMAC                            & 25.13 \textbackslash{} \textbf{-2.94}         & 23.45 \textbackslash{} -20.42          & 22.12 \textbackslash{} -16.51           & \textbf{32.52} \textbackslash{} -42.39  & 27.54 \textbackslash{} -29.36            \\ 
\hline
GACML                             & 21.52 \textbackslash{}\textbf{-12.74}         & 11.49 \textbackslash{} -24.93          & 13.93 \textbackslash{} -12.95           & 25.49 \textbackslash{} -27.42           & 28.47 \textbackslash{} -27.75            \\ 
\hline
SchedNet                          & 24.74 \textbackslash{}\textbf{-9.63}          & 7.84 \textbackslash{} -23.56           & 12.48 \textbackslash{} -23.67           & 5.98 \textbackslash{} -26.82            & 21.53 \textbackslash{} -26.43            \\ 
\hline
MADDPG w/ com                     & 28.63 \textbackslash{} -15.60                 & 19.32 \textbackslash{} -21.52          & 26.91 \textbackslash{} -19.76           & 22.17 \textbackslash{} -35.37           & 16.87 \textbackslash{} \textbf{-13.09}   \\
\hline
\end{tabular}
\caption{Cross-comparison between IMAC and baselines on predator-prey.}
\label{table:predator_prey}

\end{table*}

We also investigate the performance when the number of agents increases. We make a slight modification on environment about agents' observation. According to \cite{jiang2018learning}, we constrain that each agent can observe the nearest three agents and landmarks with relative positions and velocity. Figure \ref{fig:co_na} (b) and (c) show that the leading performance of IMAC in the 5 and 10-agent scenarios.

\textbf{Performance under stronger limited bandwidth.} We first train IMAC with different priors to satisfy different bandwidths. Then we evaluate IMAC and the modified MADDPG with communication by checking agents’ performance under different limited bandwidth constraints during the execution stage. Figure \ref{fig:limited_bandwidth} shows the density plot of episode reward per agent during the execution stage. We first respectively train IMAC with different prior distributions $z(M_i)$ of $N(0,1)$, $N(0,5)$, and $N(0, 10)$, to satisfy different default limited bandwidth constraints. Consequently, the entropy of  agents' messages satisfies the bandwidth constraints. 
In the execution stage, we constrain these algorithms into different bandwidths. As depicted in Figure \ref{fig:limited_bandwidth} (a), IMAC with different prior distributions can reach the same outcome as MADDPG with communication. Figure \ref{fig:limited_bandwidth} (b) shows that MADDPG with communication fails in the limited bandwidth environment. From Figure \ref{fig:limited_bandwidth} (c) and (d), we can see that the same bandwidth constraint is less effective in IMAC than MADDPG with communication. Results here demonstrate that IMAC discards useless information without impairment on performance. 

\textbf{Ablation.} We investigate the effect of limited bandwidth and $\beta$ on multi-agent communication on the performance of agents. Figure \ref{fig:ablation_new} (a) shows the learning curve of IMAC with different prior distributions. IMAC with $z(M_i)=N(0,1)$ achieves the best performance. When the variance is smaller or larger, the performance suffers some degradation. It is reasonable because a smaller variance means a more lossy compression, leading to less information sharing. A larger variance must bring about more redundant information than the variance without regulation, thus leading to slow convergence. $\beta$ controls the degree of compression between $h_i$ and $m_i$ for each agent $i$: the larger $\beta$, the more lossy compression. Figure \ref{fig:ablation_new} (b) shows a similar result to the ablation on limited bandwidth constraint. The reason is the same: a larger $\beta$ means a more strict compression while a smaller $\beta$ means a less strict one.

The ablation shows that as a compression algorithm, the information bottleneck method extracts the most informative elements from the source. A proper compression rate is good for multi-agent communication, because it cannot only avoid losing much information caused by higher compression, but also resist much noise caused by lower compression.

\vspace{-10pt}
\begin{figure}[htbp]
  \includegraphics[width=\linewidth]{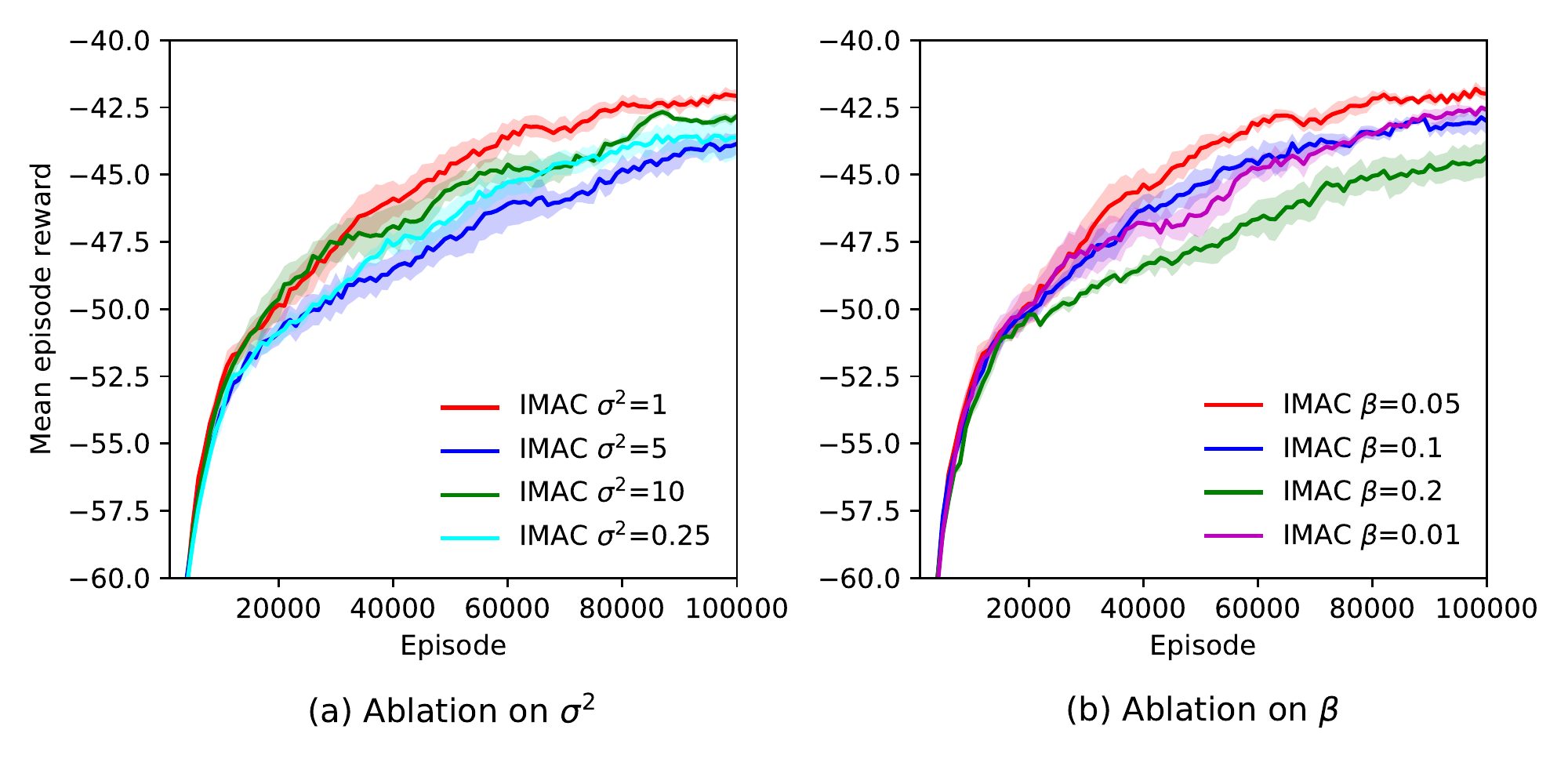}
  \centering
  \vspace{-25pt}
  \caption{Ablation: learning curves with respect to $\Sigma$ and $\beta$}
  \label{fig:ablation_new}
\end{figure}

\subsection{Predator Prey}

In this scenario, $m$ slower predators chase $n$ faster preys around an environment with $l$ landmarks impeding the way. As same as cooperative navigation, each agent observes the relative position of other agents and landmarks. Predators share common rewards, which are assigned based on the collision between predators and preys, as well as the minimal distance between two groups. Preys are penalized for running out of the boundary of the screen. In this way, predators would learn to approach and surround preys, while preys would learn to feint to save their teammates.

We set the number of predators as 4, the number of preys as 2, and the number of landmarks as 2. We use the same architecture and hyper-parameter as configuration in cooperative navigation. We train our agents by self-play for 100,000 episodes and then evaluate performance by cross-comparing between IMAC and the baselines. We average the episode rewards across 1000 rounds (episodes) as scores.

\begin{table*}[!ht]
{\begin{tabular}{|l|l|l|l|l|l|l|}
\hline
Predator \textbackslash Prey & MADDPG\_e1                      & MADDPG\_e5                       & IMAC                           & IMAC\_t5\_e1                   & IMAC\_t10\_e1                 & IMAC\_t10\_e5                  \\ \hline
MADDPG\_e1                   & 18.01 \textbackslash \textbf{-14.22}     & 24.15 \textbackslash  -29.88     & 22.38 \textbackslash -16.91    & 47.59 \textbackslash -45.64    & 34.25 \textbackslash -27.68   & 50.81 \textbackslash -43.62    \\ \hline
MADDPG\_e5                   & 26.32 \textbackslash -20.48     & 15.67 \textbackslash \textbf{-11.59}      & 29.06 \textbackslash -22.16    & 27.07 \textbackslash -22.89    & 23.44 \textbackslash -20.41   & 32.24 \textbackslash -26.46    \\ \hline
IMAC                         & \textbf{51.24} \textbackslash -42.56   & \textbf{37.37} \textbackslash -45.521     & \textbf{44.64} \textbackslash -36.49 & \textbf{49.12} \textbackslash -42.65   & \textbf{36.63} \textbackslash -30.03  & 35.42 \textbackslash \textbf{-28.82}   \\ \hline
IMAC\_t5\_e1                 & 38.86 \textbackslash -32.06   & 34.54 \textbackslash -35.03     & 9.97 \textbackslash \textbf{-3.11}     & 26.25 \textbackslash -21.06  & 11.80 \textbackslash -7.558   & \textbf{38.32} \textbackslash -32.28 \\ \hline
IMAC\_t10\_e1                & 26.67 \textbackslash -21.418    & 34.99 \textbackslash -35.02     & 9.71 \textbackslash \textbf{-4.11}   & 9.82 \textbackslash{}-6.92 & 9.82 \textbackslash -6.92 & 37.50 \textbackslash -31.30 \\ \hline
IMAC\_t10\_e5                & 45.88 \textbackslash -38.27 & 26.39 \textbackslash -35.42 & 11.51 \textbackslash \textbf{-9.12}  & 30.02 \textbackslash -27.41 & 29.08 \textbackslash -25.661  & 22.25 \textbackslash -16.51 \\ \hline 
\end{tabular}}
\vspace{-10pt}
\caption{Cross-comparison in different bandwidths on predator-prey. ``t5" means that IMAC is trained with the variance $|\Sigma|=5$. ``e1" means that during the execution, we use the batch-norm like layer to clip the message to enforce its variance $|\Sigma|=5$.}
\label{table:predator_prey:limited}
\end{table*} 

\textbf{Comparison with baselines.} We use the same baselines as in the cooperative navigation. Table \ref{table:predator_prey} represents the cross-comparing between IMAC and the baselines. Each cell consists of two numbers which denote the mean episode rewards of the predators and preys respectively. The larger the score is, the better the algorithm is. We first focus on the mean episode rewards of predator row by row. Facing the same prey, IMAC has higher scores than the predators of all the baselines and hence are stronger than other predators. Then, the mean episode rewards of the prey column by column show the ability of the prey to escape. We can see that IMAC has higher scores than the preys of most baselines and hence are stronger than other preys. We argue that IMAC leads to better cooperation than the baselines even in competitive environments and the learned policy of IMAC predators and preys can generalize to the opponents with different policies.

\textbf{Performance under stronger limited bandwidth.} Similar to the cooperative navigation, we evaluate algorithms by showing the performance under different limited bandwidth constraints during execution. Table \ref{table:predator_prey:limited} shows the performance under different limited bandwidth constraints during inference in the environment of predator and prey. We can see with limited bandwidth constraint, MADDPG with communication and IMAC suffer a degradation of performance. However, IMAC outperforms MADDPG with communication with respect to resistance to the effect of limited bandwidth. 

\subsection{StarCraftII}

We apply our method and baselines to decentralized StarCraft II micromanagement benchmark to show that IMAC can facilitate different multi-agent methods. We use the
setup introduced by SMAC \cite{samvelyan19smac} and consider combat scenarios.

\textbf{3m and 8m.} Both tasks are symmetric battle scenarios, where marines controlled by the learned agents try to beat enemy units controlled by the built-in game AI. Agents will receive some positive (negative) rewards after having enemy (allied) units killed and/or a positive (negative) bonus for winning (losing) the battle.

\textbf{Comparison with Baselines.} We adapt QMIX with communication and with IMAC, because QMIX uses the centralized training decentralized execution scheme for discrete actions. We also evaluate MADDPG with communication. However, SMAC is a discrete-action scenario, while MADDPG is for continuous control. Even if we modify the MADDPG into discrete action setup, it still fails to get any positive reward. Fig.~\ref{fig:QMIX} shows the learning curve of 200 episodes in terms of the mean episode rewards. We can see that at the beginning, QMIX with IMAC has a similar or even poor performance than QMIX with unlimited communication. As the training process going, QMIX with IMAC has a better performance than QMIX with unlimited communication. The result shows that IMAC can facilitate different multi-agent methods which have different centralized training schemes.

\textbf{Performance under stronger limited bandwidth.} We evaluate agents' performance under different limited bandwidth constraints. Results show a similar conclusion as in previous tasks (Details can be seen in the supplementary materials).

\vspace{-10pt}

\begin{figure}[htbp]
  \includegraphics[width=\linewidth]{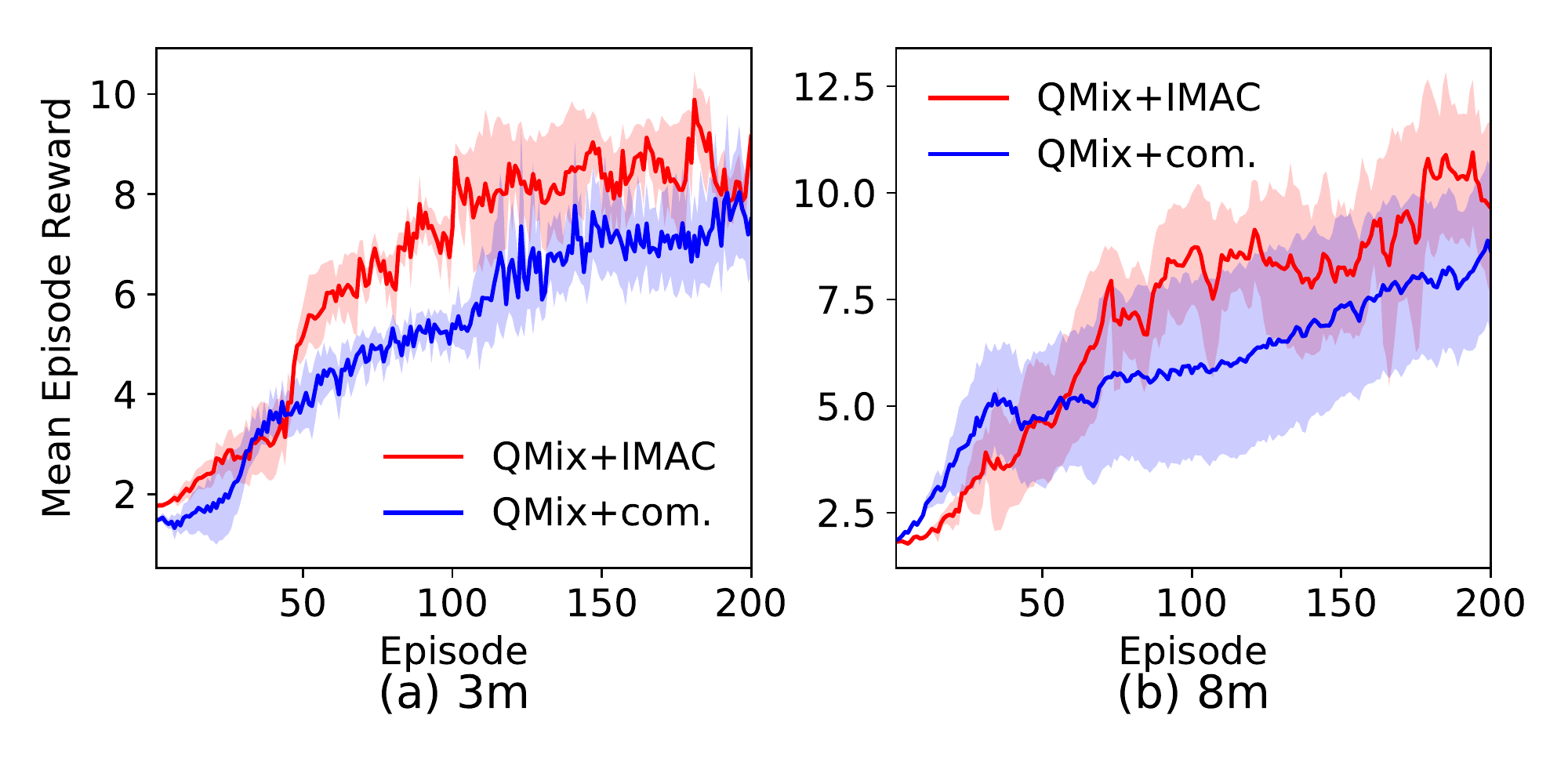}
  \centering
  \vspace{-25pt}
  \caption{Learning curves comparing IMAC to other methods for 3m and 8m in Starcraft II.}
  \label{fig:QMIX}
\end{figure}

\section{Conclusions}
In this paper, we have proposed an {\it informative multi-agent communication} method in the limited bandwidth environment, where agents utilize the information bottleneck principle to learn an informative protocol as well as scheduling. We have given a well-defined explanation of the limited bandwidth constraint from the perspective of communication theory. We prove that limited bandwidth constrains the entropy of the messages. We introduce a customized batch-norm layer, which controls the messages’entropy to simulate the limited bandwidth constraint. Inspired by the information bottleneck method, our proposed IMAC algorithm learns informative protocols and a weight-based scheduler, which convey low-entropy and useful messages. Empirical results and an accompanying ablation study show that IMAC significantly improves the agents’ performance under limited bandwidth constraint and leads to faster convergence.

\bibliography{example_paper}
\bibliographystyle{icml2020}

\end{document}